\theoremstyle{plain}
\newtheorem{theorem}{Theorem}[section]
\theoremstyle{definition}
\theoremstyle{remark}
\newcommand{\bind}{%
  \mathrel{%
\begin{tikzpicture}[x=0.75pt,y=0.75pt,yscale=-1,xscale=1]
\draw   (0,5) .. controls (0,2.24) and (2.24,0) .. (5,0) .. controls (7.76,0) and (10,2.24) .. (10,5) .. controls (10,7.76) and (7.76,10) .. (5,10) .. controls (2.24,10) and (0,7.76) .. (0,5) -- cycle ;
\draw    (5,0) -- (1.57,8.57) ;
\draw    (8.47,1.43) -- (5,10) ;
\draw    (0.33,3.33) -- (9.67,3.33) ;
\draw    (0.33,6.67) -- (9.67,6.67) ;
\end{tikzpicture}
  }%
}
\icmltitlerunning{Recasting Self-Attention with Holographic Reduced Representations}
\begin{document}

\twocolumn[
\icmltitle{Recasting Self-Attention with Holographic Reduced Representations}

\begin{icmlauthorlist}
\icmlauthor{Mohammad Mahmudul Alam}{umbc}
\icmlauthor{Edward Raff}{umbc,lps,bah}
\icmlauthor{Stella Biderman}{lps,bah,eai}
\icmlauthor{Tim Oates}{umbc}
\icmlauthor{James Holt}{lps}
\end{icmlauthorlist}

\icmlaffiliation{umbc}{Department of Computer Science and Electrical Engineering, University of Maryland, Baltimore County, Baltimore, MD, USA}
\icmlaffiliation{lps}{Laboratory for Physical Sciences, College Park, MD, USA}
\icmlaffiliation{bah}{Booz Allen Hamilton, McLean, VA, USA}
\icmlaffiliation{eai}{EleutherAI}

\icmlcorrespondingauthor{Edward Raff}{Raff\_Edward@bah.com}
\icmlcorrespondingauthor{Tim Oates}{oates@cs.umbc.edu}

\icmlkeywords{Self-attention, Holographic Reduced Representations}

\vskip 0.3in
]

\printAffiliationsAndNotice{}%

\begin{abstract}

In recent years, self-attention has become the dominant paradigm for sequence modeling in a variety of domains. However, in domains with very long sequence lengths the $\mathcal{O}(T^2)$ memory and $\mathcal{O}(T^2 H)$ compute costs can make using transformers infeasible. Motivated by problems in malware detection, where sequence lengths of $T \geq 100,000$ are a roadblock to deep learning, we re-cast self-attention using the neuro-symbolic approach of Holographic Reduced Representations (HRR). In doing so we perform the same high-level strategy of the standard self-attention: a set of queries matching against a set of keys, and returning a weighted response of the values for each key. Implemented as a ``Hrrformer'' we obtain several benefits including $\mathcal{O}(T H \log H)$ time complexity, $\mathcal{O}(T H)$ space complexity, and convergence in $10\times$ fewer epochs. Nevertheless, the Hrrformer achieves near state-of-the-art accuracy on LRA benchmarks and we are able to learn with just a single layer. Combined, these benefits make our Hrrformer the first viable Transformer for such long malware classification sequences and up to $280\times$ faster to train on the Long Range Arena benchmark. Code is available at \url{https://github.com/NeuromorphicComputationResearchProgram/Hrrformer}

\end{abstract}

\section{Introduction}

Self-attention has risen to prominence due to the development of transformers \citep{transformer} and their recent successes in machine translation, large language modeling, and computer vision applications. The fundamental construction of self-attention includes a triplet of ``queries, keys, and values'', where the response is a weighted average over the values based on the query-key interactions. This results in a quadratic memory and computational complexity, that has inhibited the use of Transformers to those without significant GPU infrastructure and prevented applications to longer sequences. Ever since, a myriad of approaches has been proposed to approximate the self-attention mechanism, with the vast majority trading some amount of accuracy for speed or memory use. The ``market'' of self-attention strategies currently offers various trade-offs in the total package of speed, memory use, and accuracy.

\begin{figure}[!t]
\centerline{\includegraphics[width=\columnwidth]{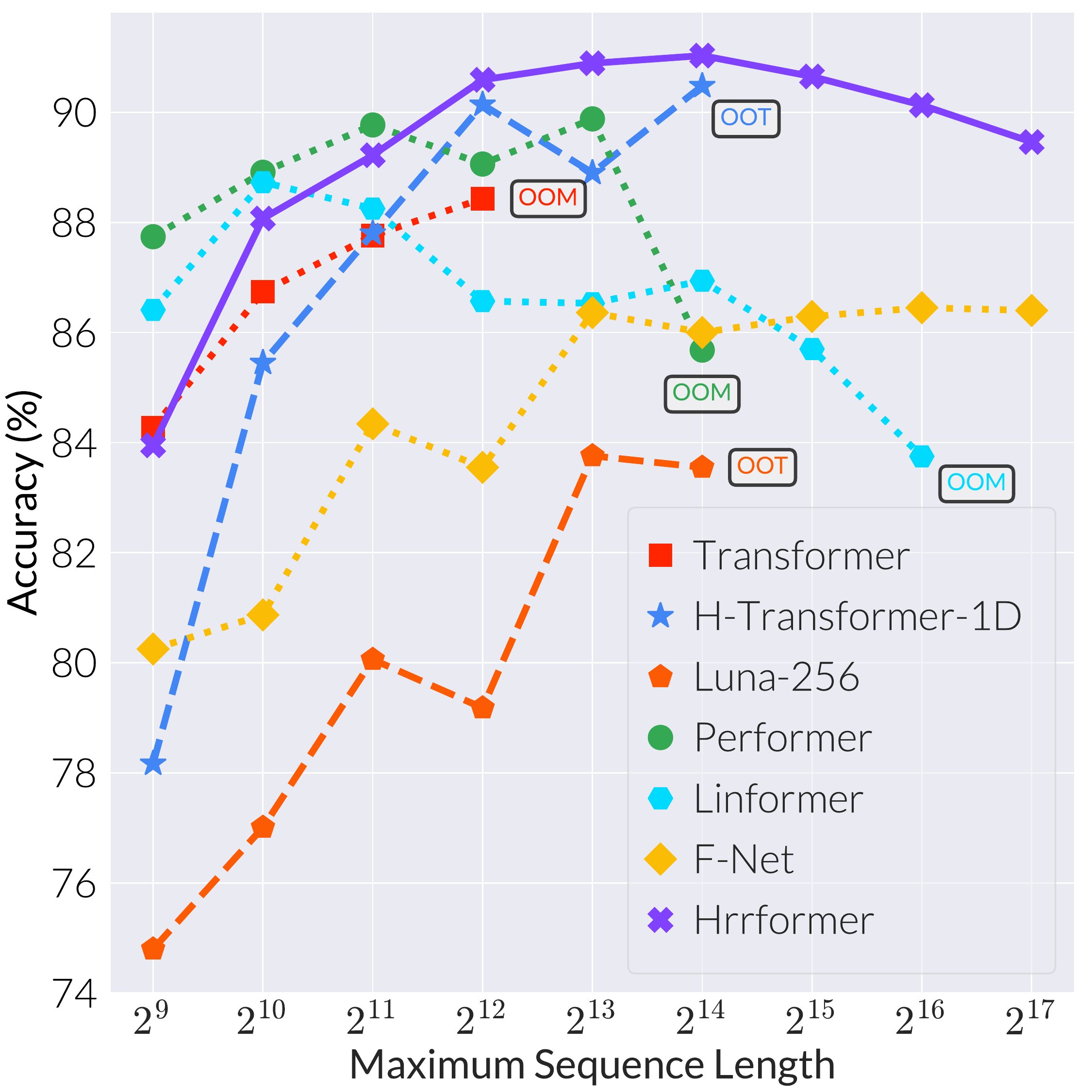}}
\caption{Our primary result, comparison of our Hrrformer with other self-attention models in EMBER malware classification dataset. Most prior methods fail early by running Out Of Memory (OOM) or Time (OOT). Hrrformer is presented in a \textit{solid} line and achieves the best accuracy, scales to longer sequences.  The two prior best models according to the Long Range Arena, H-Transformer-1D and Luna-256, are in the \textit{dashed} lines, and do not perform as well as the LRA would have indicated in speed or accuracy. The rest of the models are in the \textit{dotted} line. 
} 
\label{fig:ember_acc}
\end{figure}

We test our method in two settings: using the Long Range Arena (LRA) to compare with prior approaches and a real-world task in malware detection. These results show several benefits to the Hrrformer: it is near state-of-the-art in terms of accuracy, and one of only two methods to improve upon the original Transformer for \textit{all} tasks in the LRA. The Hrrformer sets a new benchmark for state-of-the-art speed and memory use, processing $28\times$ more samples/second and using $79.15\%$ less memory than the best prior art for each respective metric. The Hrrformer converges in $10\times$ fewer epochs and is effective with just a single layer. Combined this makes the Hrrformer up to $280\times$ times faster to train. On our malware classification task, we find that the relative accuracies of Transformer models change from the LRA benchmark, but that our Hrrformer still obtains the best accuracy and scales the best with sequence length up to $T=131,072$, as demonstrated in \autoref{fig:ember_acc}.
\par 
The remainder of our manuscript is organized as follows. Work related to our own, as well as adjacent techniques beyond our study's scope, is reviewed in \autoref{sec:related_work}. The recasting of attention in our Hrrformer is a simple procedure demonstrated in \autoref{sec:hrrformer}, which redefines the $\textit{Attention}$ function using HRR, and multi-headed self-attention then continues as normal.  We then demonstrate these benefits in \autoref{sec:experiments}, showing Hrrformer is consistently one of the best methods with respect to accuracy and considerably faster thanks to reduced memory usage, the number of layers, and epochs needed to converge. In \autoref{sec:conclusion} we draw conclusions from out work.

\section{Related Works} \label{sec:related_work}
Since the introduction of the Self-Attention mechanism and the transformer architecture, considerable research has occurred to mitigate its computational burdens. Though not explicit in much of the current literature, many of these approaches resemble strategies for improving Support Vector Machines that have similar complexity. This includes projection
~\cite{10.1145/2783258.2783364} 
to a lower dimension \cite{linformer}, finding/creating sparse structure in the correlations
~\cite{Wang2014} by~\cite{reformer,sparse,sinkhorn,longformer,bigbird},
using randomized features
~\cite{Rahimi2007,NIPS2016_6180} by ~\cite{performers}, 
factorized or budgeted representations
~\cite{si2016computatio,Wang2010a} by~\cite{Xiong2021,Ma2021}, 
and creating simplified linear approximations
~\cite{Wang2011,Kantchelian:2014:LCP:2969033.2969189} by ~\cite{linear-trans}.
Other more differentiated approaches include the hierarchical decomposition of the correlations (by ~\cite{zhu-soricut-2021-h}), and approaches that replace self-attention entirely with alternative ``mixing'' strategies \cite{synthesizer,Lee-Thorp2021}. To the best of our knowledge, ours is the first work that attempts to re-create the same logic of self-attention with the HRR. 
\par 
Among these prior methods, we note that F-Net~\cite{Lee-Thorp2021} is the most closely related as both F-Net and HRR rely upon the Fast Fourier Transform (FFT) as a fundamental building block. While F-Net does not approximate self-attention so much as replace it with an alternative ``mixing'' procedure, we include it due to its relevance in using the FFT. Our results will show significant improvement over F-Net, highlighting the value of a neuro-symbolic approach to reconstructing the same logic as opposed to using the FFT as a generic differentiable mixing strategy.
\par 
The HRR has seen successful use in cognitive science research \cite{Jones2007,Blouw2013,Stewart2014,Blouw2016,Eliasmith2012,Singh3667,10.3389/fninf.2013.00048}, but comparatively little application in modern deep learning. The symbolic properties have been previously used in knowledge graphs~\cite{10.5555/3016100.3016172} and multi-label classification~\cite{Ganesan2021}. There is limited use of HRRs for sequential modeling. \cite{10.5555/2987061.2987066} proposed an HRR-based Recurrent Neural Network (RNN), while other work has used complex numbers inspired by HRRs but not actually used the corresponding operations~\cite{Danihelka2016}. An older alternative to the HRR, the Tensor Product Representation (TPR) ~\cite{SMOLENSKY1990159} has been used to endow associative memories ~\cite{pmlr-v119-le20b} and RNNs with enhanced functionality ~\cite{huang-etal-2018-tensor,NEURIPS2018_a274315e}. Compared to these prior works, we are re-casting the logic into HRRs, rather than augmenting the logic. However, we slightly abuse the assumptions of HRRs to make our method work. A strategic design allows us to effectively remove additionally created noise via the softmax function. In addition, the TPR's complexity is exponential in the number of sequential bindings, making it a poor choice for tackling the scaling problems of self-attention.
\par 
Other recent approaches to sequential modeling such as Legendre Memory Units \cite{NIPS2019_9689}, IGLOO~\cite{Sourkov2018}, and State Space Models\cite{Gu2021,Goel2022,Gu2021a,Gu2020} are highly promising. We consider these, along with RNNs, beyond the scope of our work. Our goal is to explore the value of re-casting self-attention within the neuro-symbolic framework of HRR. As such, other sequence modeling approaches are out of scope.%
\par 
The need for both less memory and extension to very long sequences is also important in malware detection. Processing malware from raw bytes has been found to be one of the most robust feature types in the face of common malware obfuscations~\cite{Aghakhani2020}, but simple n-gram based features have been maligned for being unable to learn complex sequential information when executable can be tens of kilobytes on the small side and hundreds of megabytes on the larger side~\cite{Kephart:1995:BID:1625855.1625983, Abou-Assaleh2004, Kolter:2006:LDC:1248547.1248646, Kilograms_2019, Zak2017}. Given that a maximum $T=200M$ is realistic, many strategies to handle such sequence lengths have been developed. These include attempts to create ``images'' from malware~\cite{Nataraj:2011:MIV:2016904.2016908, Liu2016a}, using compression algorithms as a similarity metric ~\cite{Li2004, Walenstein2007, Borbely2015, S.Resende2019,Menendez2019,raff_lzjd_2017,Raff2020}, and attempts to scale 1D-convolutional networks over raw bytes~\cite{Krcal2018,MalConv,Raff2020b}. 
\par 
We will use the Ember \cite{ember} dataset for malware detection as a real-world test of our new self-attention for processing long sequences. It has been observed empirically  that ``best practices'' developed in the machine learning, computer vision, and natural language processing communities do not always transfer to this kind of data. For example, this phenomenon has been observed with CNNs~\cite{MalConv} and Transformers for malicious URL detection~\cite{Rudd2020}. Most recently, ~\cite{10.1145/3494110.3528242} attempted to apply Transformers to raw byte prediction and had to use a chunked attention that limits the attention window \cite{sukhbaatar-etal-2019-adaptive}. Using Hrrformer we show much longer sequence processing than this prior work, while simultaneously demonstrating that our method generalizes to a domain that is notorious for a lack of transfer. This increases our confidence in the effectiveness of our method.  Notably, the two current state-of-the-art Transformers as measured by the Long Range Arena (LRA)~\cite{lra} benchmarks do not pass this test, performing considerably worse on the malware task.

\section{Attention with Holographic Reduced Representations} \label{sec:hrrformer}
The HRR operation allows assigning abstract concepts to numerical vectors, and performing \textit{binding} ($\bind$) and \textit{unbinding} operations on those concepts via the vectors. One could bind ``red'' and ``cat'' to obtain a ``red cat''. The vectors can also be added, so ``red'' $\bind$ ``cat'' + ``yellow'' $\bind$ ``dog'' represents a ``red cat and yellow dog''. An inverse operator $\dagger$ is used to perform unbinding.  One can then query a bound representation, asking ``what was red?'' by unbinding ``red cat and yellow dog'' $\bind$ ``red''$^\dagger$  to get a vector $\approx$ ``cat'', where the resulting vector is necessarily corrupted by the noise by combining multiple vectors into a single fixed size representation. 
\par 
To perform this symbolic manipulation the binding operation can be defined as $\mathcal{B} = \mathbf{x} \bind \mathbf{y} =  \mathcal{F}^{-1}(\mathcal{F}(\mathbf{x}_i) \odot \mathcal{F}(\mathbf{y}_i))$, where $\mathcal {F}$ denotes the FFT and $\odot$ an element-wise multiplication\footnote{This is faster than an equivalent reformulation as multiplication by a circulant matrix of only real values.}. The inversion is defined as $\mathbf{y}^{\dagger} = \mathcal{F}^{-1} \left( \frac{1}{\mathcal{F}(\mathbf{y})} \right)$. 
\par 
Combined Plate showed that the response $\mathcal{B}^\top \mathbf{y}^\dagger$ should be $\approx 1$ if the vector $\mathbf{y} \in \mathcal{B}$, and $\approx 0$ if not present. These properties hold in expectation provided that all vectors satisfy the sufficient condition that their elements are I.I.D. sampled from a Gaussian with zero mean and variance $1/H$, where $H$ is the dimension of the vectors.
\par 
We will now show how to apply the same general logic of attention using HRR operations, creating an alternative (but not mathematically equivalent) form of self-attention that runs in linear time with respect to the sequence length. This is a slight ``abuse'' of the HRR, as our vectors will not be I.I.D. sampled random values, but results from prior layers in the network. Our design circumvents this issue in practice, which we will discuss shortly. We note this is a satisfying, but not required condition. Deviating from this adds more noise (our vectors are the outputs of prior layers in the network), but a softmax operation will act as a cleanup step to work without this condition.

Attention
can be represented using queries $\mathbf{Q}$, keys $\mathbf{K}$, and values $\mathbf{V}$ matrices where the final output is computed as the weighted sum of the values. A query vector can be mapped to a set of linked key-value pairs to retrieve the value vector associated with the associated key. The concept of \emph{binding} and \emph{unbinding} operations of HRR is applied to link the key-value pair (i.e.,
bind the terms together), and then query a single representation of all key-value pairs to find the response values. 
For this reason, we will define the steps in an element-by-element manner that more naturally corresponds to the HRR operations, but our implementation will work in a batched manner. For this reason, we will discuss a single query $\boldsymbol{q}_t \in \mathbb{R}^H$, against the set of $T$ key/value pairs $\boldsymbol{k}_t, \boldsymbol{v}_t \in \mathbb{R}^H$, where $H$ is the dimension of the representation and $t \in 1, 2, \cdots T$. Thus $\boldsymbol{K} = \left[\boldsymbol{k}_1, \boldsymbol{k}_2, \ldots \boldsymbol{k}_T \right]$ is a matrix of shape $(T, H)$, and similar for $\boldsymbol{Q}$ and $\boldsymbol{V}$. 
\par

First, we will create a superposition $\boldsymbol{\beta} \in \mathbb{R}^H$ of the key-value pairs, meaning that all vectors entering the superposition $\boldsymbol{\beta}$ are also similar (to some degree) to the final result. This is done by binding ($\bind$) each key-value pair to associate them, and summing the results to form the superposition:
\begin{equation}
    \boldsymbol{\beta} = \sum_{i=1}^T \boldsymbol{k}_i \bind \boldsymbol{v}_i
\end{equation}
$\boldsymbol{\beta}$ lets us compute interaction effects against all key-value pairs in one $\mathcal{O}(T H \log H)$ operation, avoiding the $\mathcal{O}(T^2 H)$ cost of explicit cross-correlation.

This now gives us a single vector $\boldsymbol{\beta}$ that represents the entire sequence of $T$ different key-value pair bindings. Now for each query we are interested in, we can obtain
a vector that approximately matches the values $\boldsymbol{v}_{1, 2, \ldots, T}$ via the symbolic property of HRRs that $\boldsymbol{x}^\dagger \bind \left(\boldsymbol{x} \bind \boldsymbol{y} + \boldsymbol{a} \bind \boldsymbol{b}\right) \approx \boldsymbol{y}$, giving:
\begin{equation}
    \boldsymbol{\hat{v}}_t = {\boldsymbol{q}_t}^\dagger \bind \boldsymbol{\beta}
\end{equation}
The queries are checked against the representation of all key-value pairs $\boldsymbol{\beta}$, where each $\boldsymbol{q}_t$ will contribute a corresponding value based on the response of the bound key, and the HRR framework allows us to perform them jointly. 
This now gives us a representation $\boldsymbol{\hat{v}}_t \in \mathbb{R}^H$ that represents the set of values present given the keys that respond to the input queries. 
We can then approximately determine the values present using the dot-product test that present values should result in $\approx 1$ scalars, performing:
\begin{equation}
\label{eq:cos_sim}
a_t  = \operatorname{cosine-similarity}\left(\boldsymbol{v}_t, \boldsymbol{\hat{v}}_t\right)
\end{equation}
Each $a_t$ is a scalar given the match between the original value $\boldsymbol{v}_t$ against the HRR extracted $\boldsymbol{\hat{v}}_t$, and is repeated for all $T$ values to give us a response on the relative magnitude of each value present.
With these approximate responses, we can compute a weighted distribution $\boldsymbol{w} \in \mathbb{R}^T$ by computing the softmax over all $a_{1, 2, \ldots, T}$ responses, giving $\boldsymbol{w} = \operatorname{softmax}(a_1, a_2, \ldots, a_T)$\footnote{We find  no meaningful difference in results  when using a temperature $\operatorname{softmax}(\exp(\alpha) [a_1, \ldots, a_T] )$.}. While each $a_t$ will be highly noisy due to the inherent noise of HRR's superposition $\boldsymbol{\beta}$, and an amplified level of noise due to the use of non-I.I.D. Gaussian elements, the softmax has the practical effect of removing this noise for us. This occurs because the HRR results in similar magnitude noise across each $a_t$, and the softmax operation is invariant to constant additions to all elements. 

For notational convenience to express this in more detail, let $\tilde{\Pi}_h(\boldsymbol{x}_1, \ldots, \boldsymbol{x}_k )$ denote the pairwise interactions of the $h$'th term in evaluating an expression of the form $\left(\sum_{i=1}^T \boldsymbol{x}_{i} \bind \boldsymbol{x}_{i+T}\right)^\top \boldsymbol{q}^T$, where all bold symbols are $H$ dimensional vectors. The response of any query of the form $\boldsymbol{q} = \boldsymbol{x}_m+\boldsymbol{z}$ takes the form 
$\frac{\sum_{h=1}^H (x_{m,h} + z_h) \tilde{\Pi}_h(\boldsymbol{x}_1, \ldots, \boldsymbol{x}_k) (-1)^{h+1}}{\left(\sum_{h=1}^H (-1)^{h+1} x_{m,h}  + \sum_{h=1}^H (-1)^{h+1} z_{h} \right) \left(\sum_{h=1}^H x_{m,h} + z_h \right)}$. In doing so we see that any noise vector $\boldsymbol{z}$ has a similar magnitude impact regardless of the target vector $\boldsymbol{x}_m$. Because the softmax is invariant to uniform magnitude adjustments to all inputs, and we have the same noise occurring for each computation, we get the behavior of the softmax effectively denoising the response due to the magnitude impacts. 
We discuss this further in \autoref{sec:softmax_denoise}. 
\par 
This softmax-based cleanup step is necessary because attempting to use $\boldsymbol{\hat{v}}_t$ directly results in degenerate random-guessing performance due to the noise of the HRR steps.
With $\boldsymbol{w}$ in hand, we obtain the final Attention result
\begin{equation}
\label{eq:attention}
\text{Attention}(\mathbf{Q}, \mathbf{K},  \mathbf{V}) = \left[{w}_1 \boldsymbol{v}_1, {w}_2 \boldsymbol{v}_2, \ldots, {w}_T \boldsymbol{v}_T, \right]
\end{equation}
returning a weighted version of the original values $\boldsymbol{V}$, approximating the standard attention's response. Critically, this process is linear in $T$ and approximates an all pairs interaction between queries and keys, as shown by \autoref{thm:cross}

The rest of self-attention works in the same manner as the standard Transformer. The Attention function's inputs and outputs are altered by linear layers, and instead of performing single attention, we split the feature vector $H$ of the query, key, and value into $h$ heads each having a feature size of $H'=H/h$. The attention is computed in parallel in each head and then merged into single attention which is projected to get the final output. The Hrrformer is implemented using JAX and a code snippet of the self-attention mechanism is presented in~\autoref{a:appendix}. The block diagram representation of the Hrrformer self-attention is presented in \autoref{fig:block}. The diagram is shown for single head and single batch elements for brevity. A high-level overview of the architecture in a multi-head setting is presented in \autoref{fig:mha} showing the analogy between Hrrformer and Transformer.

\begin{figure}[!ht]
\centerline{\includegraphics[width=\columnwidth]{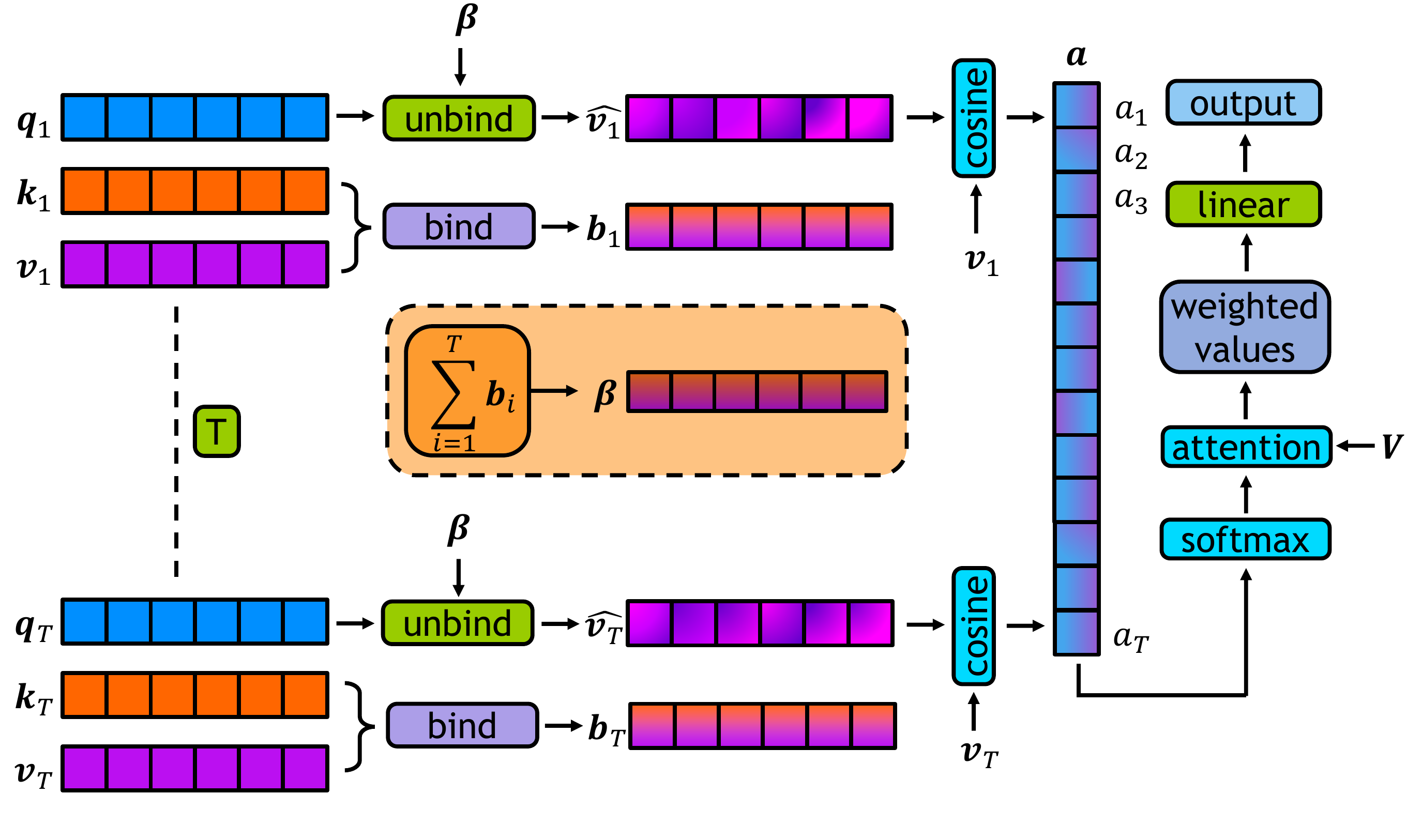}}
\caption{The block diagram of the Hrrformer self-attention. The \emph{dashed} straight line represents the continuation of the same process for each $T$ element. After computing the cosine similarity score vector $\mathbf{a}$, softmax is applied to compute the final attention weights $\mathbf{w}$ which is elementwise multiplied with value matrix $\boldsymbol{V} = \left[\boldsymbol{v}_1, \boldsymbol{v}_2, \ldots \boldsymbol{v}_T \right]$. Afterward, a linear layer is used to get the final output.}
\label{fig:block}
\end{figure}

\begin{figure}[!ht]
\centerline{\includegraphics[width=\columnwidth]{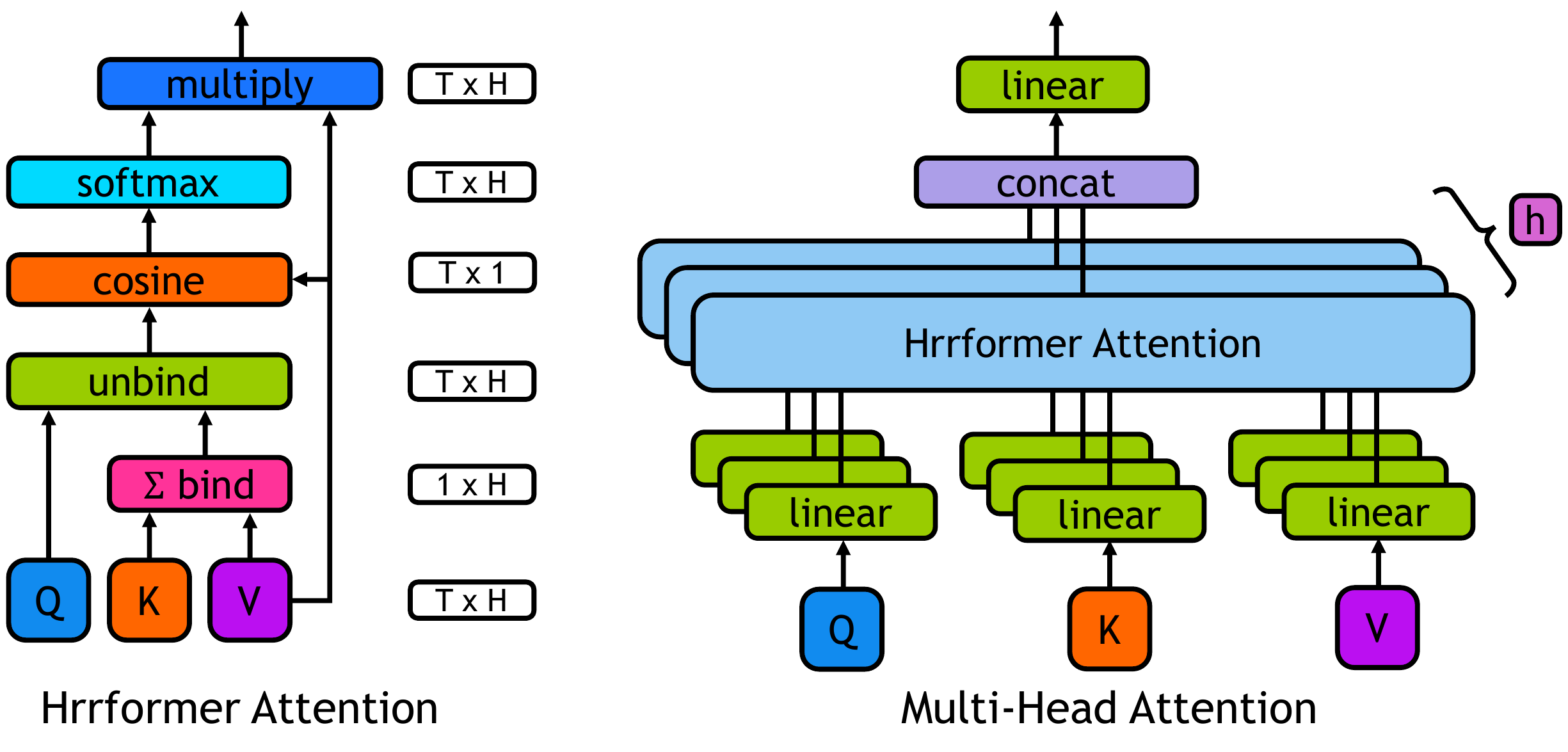}}
\caption{A high-level overview of our architecture, showing how the Hrrformer is analogous to the traditional transformer. Dataflow in a single-head with the shape of the tensor in different stages is shown on the left and multi-head attention is shown in right.}
\label{fig:mha}
\end{figure}

The time complexity of the binding/unbinding operation is $\mathcal{O}(H \log{H})$, which is performed $T$ times as the dominant cost. Therefore, the time and space complexity of the Hrrformer attention per layer is linear in sequence length $T$ where the time complexity is $\mathcal{O}(T H \log{H})$ and the space complexity is $\mathcal{O}(T H)$. 
\par 
This simple approach allows us to have fully replicated the same overall logical goals and construction of the attention mechanism first proposed by \cite{transformer}. The correspondence is not exact (e.g., returning weight original values instead of approximate value constructions), but allows us to avoid the non-I.I.D. issue of using arbitrary $\boldsymbol{Q}$, $\boldsymbol{K}$, and $\boldsymbol{V}$ as learned by the network. This neuro-symbolic reconstruction yields several benefits, as we will demonstrate in the next section. Simply replacing the self-attention in a standard Transformer with our HRR-based self-attention gives the ``Hrrformer'' that we will use to judge the utility of this new derivation.

\section{Experiments and Results} \label{sec:experiments}
The proposed Hrrformer is designed as an inexpensive alternative to the self-attention models for longer sequences. Experiments are performed to validate the effectiveness of the method in terms of time and space complexity in known benchmarks. 

Our first result is running many of the current popular and state-of-the-art (SOTA) xformers on the real-world classification task of the Ember malware detection dataset~\cite{ember}. This provides an example where the need to handle ever longer sequences exists and demonstrates that Hrrformer is one of the fastest and most accurate options on a problem with complex real-world dynamics. In doing so we also show that current SOTA methods such as Luna-256 do not generalize as well to new problem spaces, as our Hrrformer does.

Our second result will use the Long Range Arena (LRA)~\cite{lra} which has become a standard for evaluations in this space. The primary value of these results is to compare our Hrrformer with numerous prior works, establishing the broad benefits of faster time per epoch, convergence in $10\times$ fewer epochs, requiring only a single layer, and competitive overall accuracy. In addition, the LRA results are more accessible to the broader ML comunity and allow us to show visual evidence of HRR based attention learning to recover complex structure from a one-dimensional sequence.

\subsection{EMBER}
EMBER is a benchmark dataset for the malware classification task~\cite{ember}. The benchmark contains $600K$ labeled training samples ($300K$ malicious, $300K$ benign) and $200K$ labeled test samples ($100K$ malicious, $100K$ benign). The maximum sequence length of this dataset is over $100M$ which is not feasible for any of the self-attention models to train with. We experiment with relatively shorter sequence lengths starting from $T=256$ and doubling up to  $T=131072$ by truncating or padding the bytes until this maximum length is reached.
\par 
In this benchmark, Hrrformer is compared with Transformer~\cite{transformer}, H-Transformer-1D~\cite{zhu-soricut-2021-h}, Luna-256~\cite{Ma2021}, Performer~\cite{performers}, Linformer~\cite{linformer}, and F-Net~\cite{Lee-Thorp2021}. All use $8$ heads of a single encoder with $256$ embedding size and $512$ hidden size of the feed-forward network. Because this is a binary classification task, the encoder output is mapped into 2 logits output using back-to-back dense layers with ReLU activation. During training, the softmax cross-entropy loss function is optimized.
\par 
For sequence length $256$, the batch size is set to be $256$. In the experiment, as the sequence length doubles, we halved the batch size to fit the data and the model to the memory which can be expressed as $\max(2^{16 - \log_{2}{T}},~1)$. This is done to push other models to the maximum possible length, and keep the batch size consistent between experiments. Additionally, a timeout limit of $10,000$s per epoch is set before experiments are terminated. The dropout rate is chosen to be $0.1$, the learning rate is $10^{-3}$ with an exponential decay rate of $0.85$. Each of the models is trained for a total of 10 epochs in $16$ NVIDIA TESLA PH402 32GB GPUs.

Figure~\ref{fig:ember_acc} shows the classification accuracy of each of the methods for incremental sequence length from $512$ to $131072$. As the sequence length increases, Hrrformer outperforms the rest of the models achieving the highest $91.03\%$ accuracy for maximum sequence length $16384$. In terms of execution time F-Net is the only model that is faster than ours, however the accuracy of F-Net is an absolute $4.53\%$ points lower (\cref{tab:lra}).
Even after exponentially decaying batch size, we could not fit the standard Transformer model to the memory for the sequence length $8196$ indicating out-of-memory (OOM) in all figures. H-transformer-1d and Luna-256 crossed the timeout limit for sequence length $16384$ indicated out-of-time (OOT) in the figure. The detailed numeric results are presented in~\autoref{b:appendix} with additional results for the sequence length of $256$. The execution time for linear time complexity methods seems quadratic in the figure; this is due to the exponential decay of the batch size with the increase of sequence length, which was necessary to push  each model to its maximum possible  sequence length.  The more detailed timing information can be seen in \autoref{fig:ember_time}, where all models but F-Net and Hrrformer run out of time or memory before reaching the maximum sequence length. Note as well that as the sequence length increases, the already small difference in runtime between F-Net and Hrrformer reduces to near-zero. 

\begin{figure}[!ht]
\centerline{\includegraphics[width=\columnwidth]{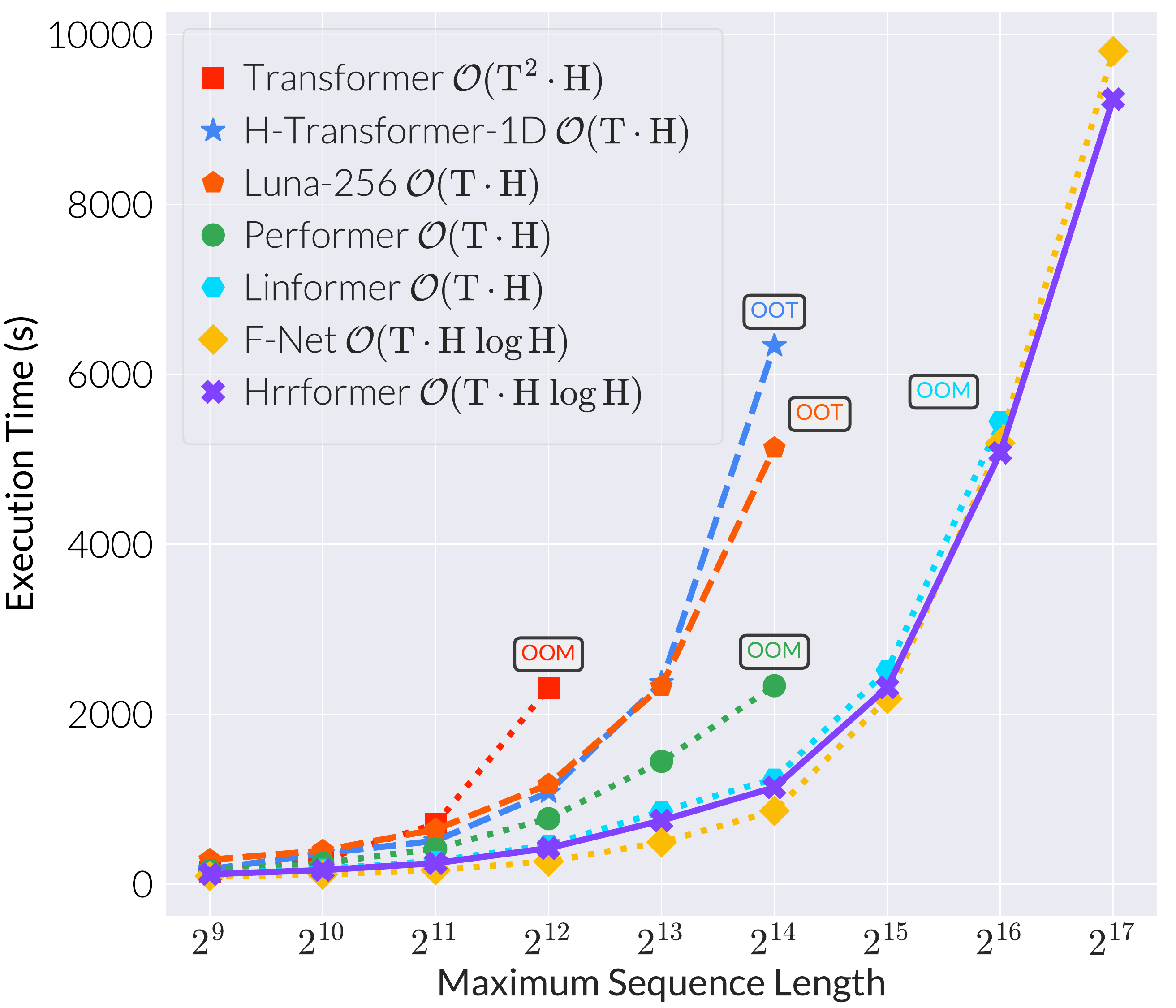}}
\caption{The total runtime on the Ember dataset for each algorithm, with the big-$\mathcal{O}$ runtime complexity associated. While Hrrformer is technically a slower big-$\mathcal{O}$ due to the extra $\log H$ term, the hidden size of the network is generally fixed and smaller than the sequence length. Thus we see in practice our design allows for faster execution in training and inference. Most prior methods fail early by running Out Of Memory (OOM) or Time (OOT).} 
\label{fig:ember_time}
\end{figure}

\par 
Of significant importance to our results is that Luna-256 performs considerably worse than all other options, compared to its top accuracy in the LRA. We hypothesize that the Ember task requires more complex reasoning and feature extraction over time and because Luna performs aggressive compression and approximation of the time component of the model it suffers in terms of accuracy. Our Hrrformer on the other hand has consistent behavior across Ember and the LRA: high accuracy, able to handle longer sequences, and convergence in few epochs, a requirement for working on this dataset which is 1 TB in size and is otherwise prohibitive in its scale.

\subsection{Long Range Arena} 
The Long Range Arena (LRA)~\cite{lra} benchmark comprises 6 diverse tasks covering image, text, math, language, and spatial modeling under long context scenarios ranging from $1K$ to $16K$. \textbf{ListOps} – task inspects the capability of modeling hierarchically structured data in a longer sequence context with mathematical operators \texttt{MAX}, \texttt{MEAN}, \texttt{MEDIAN}, and \texttt{SUM MOD} enclosed by delimiters. This is a ten-way classification problem with a maximum sequence length of $2K$. \textbf{Text} – is a byte/character level classification task using the IMDB movie review~\cite{b2} dataset. Character-level language modeling makes the models reason with compositional unsegmented data.%
This is a binary classification task with a maximum sequence length of $4K$. \textbf{Retrieval} – evaluates the model's ability to encode and compress useful information for matching and retrieval by modeling similarity score between two documents. For this task, the ACL Anthology Network~\cite{b3} dataset is used in a character level setup. This task has a maximum sequence length of $8K$ and this is a binary classification task. \textbf{Image} – is an image classification task of $10$ classes that uses grayscale CIFAR-10 dataset in a sequence of length $32 \times 32 = 1024$. This task allows assessing the model’s ability to process discrete symbols. \textbf{Pathfinder} – task evaluates the model’s performance over long-range spatial dependency. This is a binary classification task that classifies whether two circles are connected by a line which is introduced in~\cite{b4}, and includes distractor paths. The images have dimension $32\times32$ which is reshaped into $1024$. \textbf{Path-X} - is extremely difficult version of pathfinder task which contains images of dimension $128 \times 128 = 16384$ with additional distractor paths.

\begin{table*}[!ht]
\centering
\caption{Accuracy results of Hrrformer on Long Range Arena (LRA) benchmark. Even using just one layer Hrrformer is highly competitive, and the only method besides Luna is a Pareto improvement over the original Transformer. Our method is further advantaged in that it requires $10\times$ fewer epochs to reach competitive accuracies. Best results in \textbf{bold}, second best in \textit{italics}. }
\vspace{5pt}
\renewcommand{\arraystretch}{1.2}
\label{tab:lra}
\adjustbox{max width=\textwidth}{
\begin{tabular}{@{}lcccccccc@{}}
\toprule
Model                                      & ListOps (2k)        & Text (4k)           & Retrieval (4k)      & Image (1k)          & Path (1k)          & Path-X (16k)         & Avg          & Epochs      \\ \midrule
Transformer~\cite{transformer}             & 36.37          & 64.27          & 57.46          & 42.44          & 71.40          & FAIL               & 54.39          & 200         \\ \midrule
Local Attention~\cite{lra}                 & 15.82          & 52.98          & 53.39          & 41.46          & 66.63          & FAIL               & 46.06          & 200         \\
Linear Transformer~\cite{linear-trans}     & 16.13          & 65.90          & 53.09          & 42.34          & 75.30          & FAIL               & 50.55          & 200         \\
Reformer~\cite{reformer}                   & 37.27          & 56.10          & 53.40          & 38.07          & 68.50          & FAIL               & 50.67          & 200         \\
Sparse Transformer~\cite{sparse}           & 17.07          & 63.58          & 59.59          & 44.24          & 71.71          & FAIL               & 51.24          & 200         \\
Sinkhorn Transformer~\cite{sinkhorn}       & 33.67          & 61.20          & 53.83          & 41.23          & 67.45          & FAIL               & 51.29          & 200         \\
Linformer~\cite{linformer}                 & 35.70          & 53.94          & 52.27          & 38.56          & 76.34          & FAIL               & 51.36          & 200         \\
Performer~\cite{performers}                & 18.01          & 65.40          & 53.82          & 42.77          & \underline{77.05}    & FAIL               & 51.41              & 200         \\
Synthesizer~\cite{synthesizer}             & 36.99          & 61.68          & 54.67          & 41.61          & 69.45          & FAIL               & 52.88          & 200         \\
Longformer~\cite{longformer}               & 35.63          & 62.85          & 56.89          & 42.22          & 69.71          & FAIL               & 53.46          & 200         \\
BigBird~\cite{bigbird}                     & 36.05          & 64.02          & 59.29          & 40.83          & 74.87          & FAIL               & 55.01          & 200         \\
F-Net~\cite{Lee-Thorp2021}                 & 35.33          & 65.11          & 59.61          & 38.67          & \textit{77.78} & FAIL               & 54.42          & 200         \\
Nystromformer~\cite{Xiong2021}             & 37.15          & 65.52          & \textbf{79.56} & 41.58          & 70.94          & FAIL               & 58.95          & 200         \\
Luna-256~\cite{Ma2021}                     & 37.98          & 65.78          & \underline{79.56}    &  47.86    &  \textbf{78.55}     & FAIL           & \textbf{61.95} & 200         \\
H-Transformer-1D~\cite{zhu-soricut-2021-h} & \textbf{49.53} & \textbf{78.69} & 63.99          & 46.05          & 68.78          & FAIL               &  \underline{61.41}   & 200         \\ \midrule
Hrrformer Single-layer & 38.79 & \underline{66.50}  & 75.40 & \underline{48.47} & 70.71 & FAIL & 59.97 & \textbf{20} \\ 
Hrrformer Multi-layer & \underline{39.98} & 65.38 & 76.15 & \textbf{50.45} & 72.17 & FAIL & 60.83 & \textbf{20} \\ 
\bottomrule
\end{tabular}
}
\end{table*}

In Hrrformer, we use the same number or fewer parameters as mentioned in the LRA benchmark~\cite{lra} across the tasks and a list of hyper-parameters used in each task is provided in \autoref{b:appendix}. Global average pooling is applied to the output of the encoder sequences and subsequently back to back dense layers are used with ReLU activation to get the final logits output. During training, the softmax cross-entropy loss function is optimized using the Adam optimizer. We use the exponential decay learning rate with the initial value of $10^{-3}$, and the final value of $10^{-5}$. For all the tasks, Hrrformer is trained for a total of $20$ epochs both in the case of single- and multi-layer which is $10\times$ less training than previous works. The results in terms of accuracy in all the tasks of the LRA benchmark are presented in Table~\ref{tab:lra}.~\footnote{The Pathfinder task as originally reported by \cite{lra} uses a ``hard'' version of the task, but the code provided defaults to an ``easy'' version. Most papers do not make clear which version of the task is evaluated, and the F-Net authors indicated in correspondence the ``easy'' version was used. Luna-256 used the hard version, and other authors have not yet reached back to us. On the easy version, Hrrformer gets 80.81\% in a single-layer and 80.77\% in the multi-layer, but we report the hard version in our table and assume others are using the hard version.} 
\par 
Ours is one of only two methods that improve accuracy upon the Transformer and consistently displayed higher performance in all the tasks. We show the performance for both single and multiple layers. In 3 of the 5 tasks (ListOps, Text, Image), Hrrformer achieves the second-best results using only $1$ layer of the encoder. For the Image classification task, it achieves the best results of $50.45\%$ accuracy using $3$ layers of the encoder. Moreover, Hrrformer requires $10\times$ fewer epochs than others to produce comparable or better results. Overall, the multi-layered Hrrformer produces the third-best result of $60.83\%$ in the benchmark.

\begin{figure}[!ht]
\centerline{\includegraphics[width=\columnwidth]{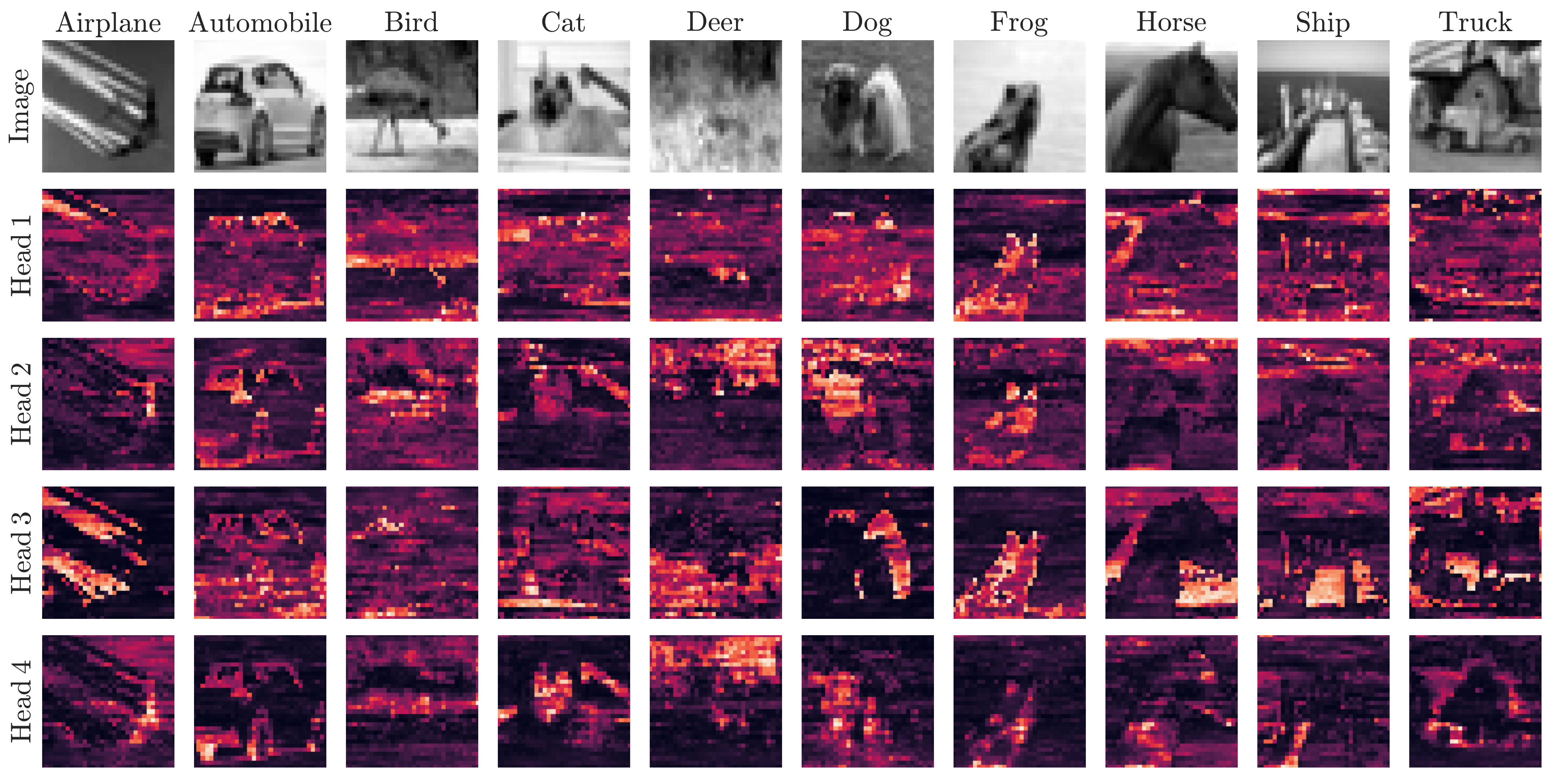}}
\caption{Visualization of weight vector $\mathbf{w} \in \mathbb{R}^{1024 \times 1}$ reshaped to $32 \times 32$, the shape of the original image of the CIFAR-10 dataset used in the LRA Image classification task. A single-layer Hrrformer is able to learn the 2D structure from the 1D sequence of the image. This is particularly noticeable in the Airplane, dog, Frog, and Horse images. Note context sensitive Head activation can be observed comparing Head 3 for dog vs Frog, where activation occurs for different pixel intensities indicating the model is not naively activating for simple color intensity.} 
\label{fig:heatmap}
\end{figure}

\begin{figure}[!ht]
\centerline{\includegraphics[width=\columnwidth]{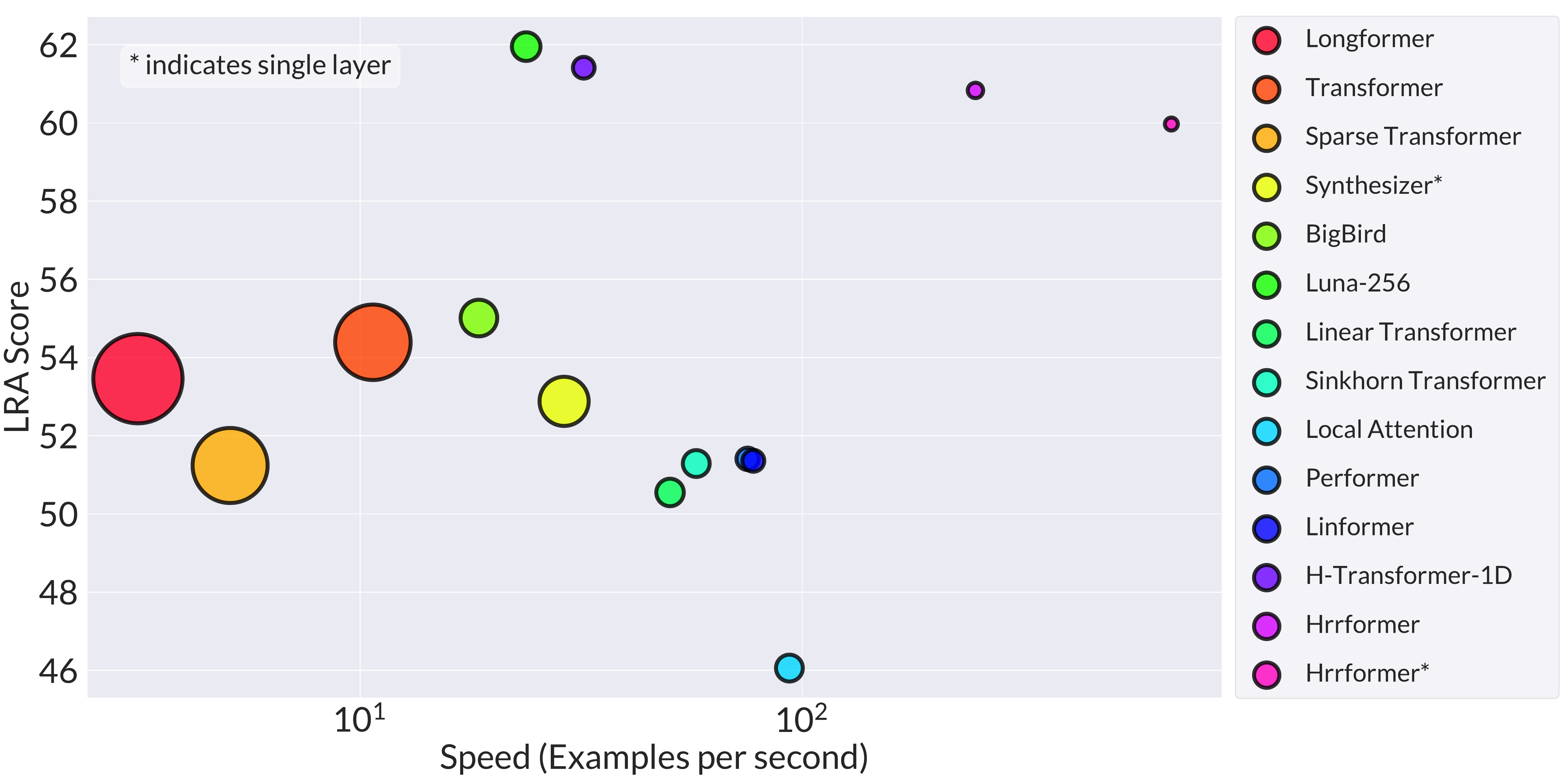}}
\caption{Performance ($y$-axis), Speed ($x$-axis, log-scale) of different xformers, and memory footprint on GPU are illustrated by the size of the circles. Hrrformer is in the top-right of the graph, with the smallest circle size, indicating it is the fastest and most memory efficient for training (this does \textit{not} factor in convergence speed).} 
\label{fig:lra_speed}
\end{figure}

The ability to learn with a single layer aids in both throughput and memory use. The result is surprising, and in visualizing the weight vector $\mathbf{w}$ we can confirm that a single layer is sufficient to learn the structure. We show this for the Image task of single-layer Hrrformer in \autoref{fig:heatmap} (multi-layer in \autoref{c:appendix}). Here, the weight vector $\mathbf{w} \in \mathbb{R}^{1024 \times 1}$ is reshaped to $32\times32$, the shape of the original grayscale images of the CIFAR-10 dataset for visualization. From the figure, it is clear that the Hrrformer is learning to identify the 2D structure from the 1D sequence of the Image classification task. We also compare against the standard Transformer in Appendix \autoref{fig:transformer_weight}, where it is less obvious how the model's weights might correspond to the 2D structure of the image. 
\par 
Hrrformer's benefits go beyond accuracy and convergence speed: it is fast and consumes the least amount of memory on GPU of the alternatives tested. Figure~\ref{fig:lra_speed} compares all the self-attention models in terms of LRA score, speed (training examples per second), and memory footprint (size of the circle). LRA score is the mean accuracy of all the tasks in the LRA benchmark. Speed and memory footprint is calculated on the byte-level text classification task per epoch. To measure these results, a single NVIDIA TESLA PH402 32GB GPU is utilized with a fixed batch size of $4$ and a maximum sequence length of $4000$ with an embedding size of $32$ and feature size of $64$. For all the models $6$ layers of the encoder are used. Both single- and multi-layered Hrrformer are $28\times$ and $10\times$ faster than the Luna-256~\cite{Ma2021} which has achieved the highest accuracy in the LRA benchmark. Hrrformer also consumes the least amount of memory, taking $79.15\%$ and $70.66\%$ less memory compared to Luna-256 in the case of single and multi-layered Hrrformer, respectively. The detailed numeric results of Figure~\ref{fig:lra_speed} are given in~\autoref{b:appendix}.

Hrrformer also reduces the amount of overfitting between training and test performance. We compare the training and test accuracy, and amount of overfitting of the Image classification task to the other self-attention models presented in LRA benchmark~\cite{lra} and for which data are available\footnote{We do not have the compute resources to run the other xformers on the LRA ourselves, in part due to the higher memory use that exceeds our infrastructure.}. Table~\ref{tab:overfitting} exhibits that the Hrrformer acquires the best results on the test set with an $6.83\%$ train/test gap. The learning curves of all the task is also presented in Appendix Figure~\ref{fig:lra_acc} demonstrating the lower overfitting nature of the Hrrformer across the tasks.

\begin{table}[!ht]
\centering
\caption{Training and test accuracy of different self-attention models on the Image classification task. Among all the models, Hrrformer achieves the best test accuracy with the least amount of overfitting (lower is better).}
\vspace{5pt}
\label{tab:overfitting}
\renewcommand{\arraystretch}{1.2}
\adjustbox{max width=\columnwidth}{
\begin{tabular}{@{}lccc@{}}
\hline
Model & Train Accuracy (\%) $\uparrow$ & Test Accuracy (\%) $\uparrow$ & Overfitting (\%) $\downarrow$ \\ \midrule
Transformer & 69.45 & 42.44 & 27.01 \\
Local Attention & 63.19 & 41.46 & 21.73 \\
Sparse Transformer & 66.74 & 44.24 & 22.50 \\
Longformer & 71.65 & 42.22 & 29.43 \\
Linformer & 97.23 & 38.56 & 58.67 \\
Reformer & 68.45 & 38.07 & 30.38 \\
Sinkhorn Transformer & 69.21 & 41.23 & 27.98 \\
Synthesizer & \textbf{97.31} & 41.61 & 55.70 \\
BigBird & 71.49 & 40.83 & 30.66 \\
Linear Transformer & 65.61 & 42.34 & 23.27 \\ 
Performer & 73.90 & 42.77 & 31.13 \\ \midrule
Hrrformer & 57.28 & \textbf{50.45} & \textbf{6.83} \\ \bottomrule
\end{tabular}%
}
\end{table}

Hrrformer's inference time is also faster than other options for long sequences. As an example, the time to make predictions for the text classification task is given in Appendix \autoref{tab:inference_results}, where the single-layer Hrrformer is the fastest option, followed by the multi-layer Hrrformer. We also find Hrrformer's inference time is relatively faster regardless of the batch size. The inference time for the Hrrformer with a batch size of 2 is still $5\times$ faster than the inference time for the Transformer with a batch size of 32. More details are presented in Appendix \autoref{tab:infer_batch}.

\section{Conclusion} \label{sec:conclusion}
The Hrrformer is a neuro-symbolic reconstruction of self-attention. The proposed method is faster in compute and consumes less memory per layer. We have tested Hrrformer on known LRA and EMBER benchmarks. In the LRA benchmark, Hrrformer has achieved the near state-of-the-art accuracy of $60.83\%$ using a single layer of an encoder. In terms of speed, it is $28\times$ and $10\times$ faster than the current SOTA in the case of single and multiple layers, respectively. Additionally, it takes $79.15\%$ and $70.66\%$ less memory on GPU compared to Luna-256 for single and multiple layers of Hrrformer. Moreover, it converges $10\times$ faster than other self-attention models. In the EMBER malware classification dataset, Hrrformer attained the highest $91.03\%$ accuracy for a maximum sequence length of $16384$ with a significantly faster processing rate. In conclusion, Hrrformer is $\approx 280\times$ faster to train and a single layer of the encoder is sufficient to learn the structure of the input.

\bibliography{refs, raffReferences}
\bibliographystyle{icml2023}

\newpage
\appendix
\onecolumn

\section{Self-attention Definition}
\label{a:appendix}
The code of the Hrrformer self-attention model is written in JAX. Below is a code snippet of the Multi-headed Hrrformer attention. The shape of the output vector of each line is given by a comment where $B$, $T$, and $H$ represent the batch size, maximum sequence length, and feature size, respectively. $h$ is the number of heads and $H'$ is the feature dimension in each head. 
\begin{figure*}[!htbp]
    \centering
\begin{lstlisting}[language=Python]
class SelfAttention(nn.Module):
    features: int
    heads: int

    def setup(self):
        self.binding = Binding()
        self.unbinding = Unbinding()
        self.similarity = CosineSimilarity()

    @nn.compact
    def __call__(self, inputs, mask=None):
        dense = partial(nn.DenseGeneral, features=self.features, use_bias=False)

        q, k, v = (dense(name='query')(inputs),  # (B, T, H)
                   dense(name='key')(inputs),    # (B, T, H)
                   dense(name='value')(inputs))  # (B, T, H)
                   
        q, k, v = (split(q, self.heads),  # (B, h, T, H')
                   split(k, self.heads),  # (B, h, T, H')
                   split(v, self.heads))  # (B, h, T, H')

        bind = self.binding(k, v, axis=-1)           # (B, h, T, H')
        bind = np.sum(bind, axis=-2, keepdims=True)  # (B, h, 1, H')

        vp = self.unbinding(bind, q, axis=-1)                   # (B, h, T, H')
        scale = self.similarity(v, vp, axis=-1, keepdims=True)  # (B, h, T, 1)

        if mask is not None: 
            scale = scale + (1. - mask) * (-1e9)  # (B, h, T, 1)
        weight = nn.softmax(scale, axis=-2)       # (B, h, T, 1)
        weighted_value = weight * v               # (B, h, 1, H')

        weighted_value = merge(weighted_value)         # (B, T, H)
        output = dense(name='output')(weighted_value)  # (B, T, H)
        return output
\end{lstlisting}
    \caption{Multi-headed Hrrformer Self-attention.}
\end{figure*}

\begin{theorem} \label{thm:cross}
The Hrrformer Attention approximates an all-pairs interaction between all queries and key-values. 
\end{theorem}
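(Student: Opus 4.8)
The plan is to make the implicit $T \times T$ interaction explicit by expanding the score $a_t$ over the superposition and exploiting the linearity of binding. Since $\bind$ is an elementwise product in the Fourier domain, it distributes over the sum defining $\boldsymbol{\beta}$, so I would first write
\begin{equation}
\hat{\boldsymbol{v}}_t = \boldsymbol{q}_t^\dagger \bind \boldsymbol{\beta} = \sum_{i=1}^T \boldsymbol{q}_t^\dagger \bind \boldsymbol{k}_i \bind \boldsymbol{v}_i .
\end{equation}
The unnormalized response that feeds the cosine similarity then distributes over the same sum,
\begin{equation}
\boldsymbol{v}_t^\top \hat{\boldsymbol{v}}_t = \sum_{i=1}^T \boldsymbol{v}_t^\top \left( \boldsymbol{q}_t^\dagger \bind \boldsymbol{k}_i \bind \boldsymbol{v}_i \right),
\end{equation}
so a single length-$T$ score vector $\boldsymbol{a}$ already contains one summand for every query--key pair $(t,i)$. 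Repeating over $t=1,\dots,T$ exposes all $T^2$ pairs, which is the content I want to certify: the $\mathcal{O}(T)$-sized computation secretly contracts the full interaction tensor.

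Next I would identify each summand as a genuine pairwise interaction kernel $g(\boldsymbol{q}_t,\boldsymbol{k}_i,\boldsymbol{v}_i,\boldsymbol{v}_t)$ and connect it to standard attention's $\boldsymbol{q}_t^\top\boldsymbol{k}_i$. Using Parseval together with the Fourier definitions of $\bind$ and $\dagger$, the $i$-th term becomes
\begin{equation}
\boldsymbol{v}_t^\top\!\left(\boldsymbol{q}_t^\dagger \bind \boldsymbol{k}_i \bind \boldsymbol{v}_i\right) = \frac{1}{H}\sum_{h=1}^H \frac{\overline{\mathcal{F}(\boldsymbol{v}_t)_h}\,\mathcal{F}(\boldsymbol{k}_i)_h\,\mathcal{F}(\boldsymbol{v}_i)_h}{\mathcal{F}(\boldsymbol{q}_t)_h},
\end{equation}
which makes explicit the per-coordinate pairwise coupling underlying the $\tilde{\Pi}_h$ expression given earlier. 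Invoking Plate's property that $\boldsymbol{q}_t^\dagger \bind \boldsymbol{k}_i \approx \boldsymbol{1}$ (the binding identity) when $\boldsymbol{q}_t$ matches $\boldsymbol{k}_i$ and is mean-zero noise otherwise, the term collapses to $\approx \boldsymbol{v}_t^\top \boldsymbol{v}_i$ precisely on the matching pair, recovering the select-by-query-then-weight-by-value behavior of softmax attention. At the level of expectations this is immediate from the I.I.D.-Gaussian assumptions quoted for HRR.

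The main obstacle is the word \emph{approximates}: I must argue that the off-match terms (those $i$ not responding to $\boldsymbol{q}_t$), which are noise rather than signal, do not overwhelm the genuine pairwise contributions. I would handle this as the surrounding text sets up: write $\boldsymbol{q}_t = \boldsymbol{x}_m + \boldsymbol{z}$, use the explicit ratio expression with $\tilde{\Pi}_h$ to show that the noise vector $\boldsymbol{z}$ contributes a term of essentially the same magnitude to every $a_t$, and then invoke the shift-invariance of softmax to argue this common offset cancels. The delicate part is quantifying ``essentially the same magnitude'' under the non-I.I.D. inputs the network actually produces; I expect to carry this only in expectation (with variance scaling like $1/H$), consistent with the paper's claim that the statement holds approximately and is cleaned up in practice by softmax. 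The combinatorial content --- that all pairs appear --- falls out entirely from the linearity and distributivity steps and is the easy half of the argument.
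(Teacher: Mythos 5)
Your proposal is correct, and its combinatorial core --- distributing the unbinding over the superposition $\boldsymbol{\beta}$ so that $\boldsymbol{v}_t^\top \hat{\boldsymbol{v}}_t$ decomposes into one summand per $(t,i)$ query--key pair --- is exactly the paper's entire proof, which stops at that observation ("at the cost of noise terms not specified"). The additional Parseval expansion, the Plate-property collapse to $\boldsymbol{v}_t^\top\boldsymbol{v}_i$, and the noise/softmax-shift-invariance analysis you sketch go beyond what the paper proves for this theorem (it defers that material to the main text and Appendix~\ref{sec:softmax_denoise}) and are not needed for the statement as written; the only nit is that the identity element of $\bind$ is the impulse vector (all-ones in the Fourier domain), not the all-ones vector $\boldsymbol{1}$ itself, when you invoke $\boldsymbol{q}_t^\dagger \bind \boldsymbol{k}_i \approx$ identity.
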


\begin{proof}
Expand \autoref{eq:cos_sim} as
$ \operatorname{cosine-sim}\left(\boldsymbol{v}_t, {\boldsymbol{q}_t}^\dagger \bind \left( \sum_{i=1}^T \boldsymbol{k}_i \bind \boldsymbol{v}_i  \right) \right) $. The distributive property of the  binding operation $\bind$ allows us to move the query inside summation, producing $\operatorname{cosine-sim}\left(\boldsymbol{v}_t, \sum_{i=1}^T  \boldsymbol{q}_i^\dagger \bind \boldsymbol{k}_i \bind \boldsymbol{v}_i    \right)$. At the cost of noise terms not specified, we can see that the response of the cosine similarity is produced from an interaction between the time step $t$ and summation of all query-key pairs for $1,2,\cdots,T$ steps, showing that a cross-product is approximated by the Hrrformer. 
\end{proof}

\section{Hyperparameters \& Numeric Results} \label{b:appendix}
The hyperparameters used in each task of the Long Range Arena (LRA) benchmark and EMBER malware classification task are presented in \autoref{tab:hyper_params}. In all of the tasks, the Adam optimizer is used with an exponential decay learning rate. The starting learning rate is $10^{-3}$ and the final learning rate is $10^{-5}$. The decay rate indicates the amount of learning rate decay per epoch. MLP dim indicates the number of features used in the first linear layer of the MLP block after the attention block.

\begin{table*}[!ht]
\centering
\caption{List of the hyperparameters used in the Long Range Arena (LRA) benchmark and EMBER malware classification task.}
\vspace{5pt} 
\label{tab:hyper_params}
\renewcommand{\arraystretch}{1.5}
\adjustbox{max width=\textwidth}{
\begin{tabular}{|c|c|c|c|c|c|c|c|c|c|c|}
\hline
\multirow{2}{*}{Task} & \multirow{2}{*}{\shortstack{Positional\\Embedding}} & \multirow{2}{*}{Batch size} & \multirow{2}{*}{Vocab size} & \multirow{2}{*}{\shortstack{Maximum\\Sequence Length}} & \multirow{2}{*}{Embed dim} & \multirow{2}{*}{MLP dim} & \multirow{2}{*}{Heads} & \multirow{2}{*}{Layers} & \multirow{2}{*}{Classes} & \multirow{2}{*}{\shortstack{Decay\\rate}} \\
 &  &  &  &  &  &  &  &  &  &  \\ \hline
ListOps & Learned & 32 & 17 & 2000 & 512 & 256 & 8 & 6 & 10 & 0.90 \\ \hline
Text & Fixed & 32 & 257 & 4000 & 512 & 1024 & 8 & 6 & 2 & 0.90 \\ \hline
Retrieval & Fixed & 64 & 257 & 4000 & 128 & 64 & 4 & 4 & 2 & 0.90 \\ \hline
Image & Fixed & 32 & 256 & 1024 & 256 & 128 & 4 & 3 & 10 & 0.95 \\ \hline
Path & Learned & 128 & 256 & 1024 & 1024 & 256 & 8 & 2 & 2 & 0.95 \\ \hline
Malware & Learned & $\max(2^{16 - \log_{2}{T}},~1)$ & 257 & $T$ & 256 & 512 & 8 & 1 & 2 & 0.85 \\ \hline
\end{tabular}
}
\end{table*}

The detailed results of the Hrrformer of Figures~\ref{fig:lra_speed} are presented here. The numerical results of the comparison of Hrrformer with other self-attention models in terms of LRA score, speed (examples per second), and memory footprint (size of the circle) are presented in Table~\ref{tab:lra_results}. From the table, it can be observed that the Hrrformer only lags $1.12\%$ behind Luna-256~\cite{Ma2021} in the LRA score. However, in terms of speed, single- and multi-layered Hrrformer are $28\times$ and $10\times$ faster than Luna-256. Moreover, Hrrformer consumes $79.15\%$ and $70.66\%$ less memory than Luna-256 in the case of single and multi-layered Hrrformer, respectively. The numerical results of EMBER malware classification are presented in Table~\ref{tab:malware_acc}. From the table, it can be observed that as the sequence length increases, Hrrformer surpasses the other models, and for the sequence length $16,384$,  has achieved the highest accuracy of $91.03\%$.

\begin{table*}[!ht]
\centering
\caption{LRA score, speed in examples per second, and total memory usage in MB of all the different xformer models used in LRA benchmark. The speed and memory usage metrics are computed using 6 layers of encoder in byte-level text classification task. In the chart, * indicates the use of single layer of encoder. Best results in \textbf{bold}, second best in \textbf{italics}. }
\vspace{5pt}
\renewcommand{\arraystretch}{1.2}
\label{tab:lra_results}
\adjustbox{max width=0.75\columnwidth}{
\begin{tabular}{lccccc}
\toprule
\multirow{2}{*}{Model} & \multirow{2}{*}{\shortstack{LRA Score\\Accuracy (\%)}} & \multirow{2}{*}{\shortstack{Speed\\(Examples per Second)}} & \multirow{2}{*}{Time (s)} & \multirow{2}{*}{\shortstack{Memory Usage\\(MB)}} \\
 &  &  &  &  \\ \midrule
Longformer & 53.46 & 3.14 & 7959.42 & 30978.21 \\
Sparse Transformer & 51.24 & 5.08 & 4923.98 & 21876.57 \\
Transformer & 54.39 & 10.68 & 2340.31 & 22134.52 \\
BigBird & 55.01 & 18.56 & 1347.26 & 5218.89 \\
Luna-256 & \textbf{61.95} & 23.74 & 1053.25 & 3184.66 \\
Synthesizer* & 52.88 & 28.92 & 864.45 & 9377.55 \\
H-Transformer-1D & \underline{61.41} & 32.03 & 780.42 & 1838.28 \\
Linear Transformer & 50.55 & 50.22 & 497.84 & 2941.39 \\
Sinkhorn Transformer & 51.29 & 57.56 & 434.31 & 2800.88 \\
Performer & 51.41 & 75.23 & 332.31 & 1951.53 \\
Linformer & 51.36 & 77.49 & 322.62 & 1867.64 \\
Local Attention & 46.06 & 93.51 & 267.35 & 2800.88 \\
Hrrformer & 60.83 & \underline{246.45} & \underline{101.44} & \underline{934.41} \\
Hrrformer* & 59.97 & \textbf{683.81} & \textbf{36.56} & \textbf{663.88} \\ \bottomrule
\end{tabular}%
}
\end{table*}

\begin{table*}[!ht]
\centering
\caption{Accuracy and the execution time of the different self-attention models for different sequence lengths in the EMBER malware classification dataset. Best results in \textbf{bold}. }
\label{tab:malware_acc}
\vspace{5pt}
\renewcommand{\arraystretch}{1.2}
\adjustbox{max width=\textwidth}{
\begin{tabular}{cccccccccccc}
\toprule
\multirow{2}{*}{Model} &  & \multicolumn{10}{c}{Maximum Sequence Length} \\ \cline{3-12} 
 &  & 256 & 512 & 1,024 & 2,048 & 4,096 & 8,192 & 16,384 & 32,768 & 65,536 & 131,072 \\ \midrule
\multirow{2}{*}{Transformer} & Accuracy (\%) & 74.87 & 84.27 & 86.74 & 87.76 & 88.43 & -- & -- & -- & -- & -- \\
 & Time (s) & 101.59 & 146.96 & 286.98 & 708.7 & 2305.28 & -- & -- & -- & -- & -- \\ \midrule
\multirow{2}{*}{H-Transformer-1D} & Accuracy (\%) & 59.59 & 78.17 & 85.45 & 87.8 & 90.14 & 88.9  & 90.48 & -- & -- & --\\
 & Time (s) & 116.6 & 175.04 & 362.41 & 509.63 & 1082.67 & 2371.96  & 6336.37 & -- & -- & -- \\ \midrule
 \multirow{2}{*}{Luna-256} & Accuracy (\%) & 70.21 & 74.8 & 77.01 & 80.06 & 79.18 & 83.76 & 83.55 & -- & -- & -- \\
 & Time (s) & 243.04 & 287.5 & 395.87 & 643.81 & 1172.35 & 2326.15 & 5132.95 & -- & -- & -- \\ \midrule
 \multirow{2}{*}{Performer} & Accuracy (\%) & 78.0 & \textbf{87.74} & \textbf{88.91} & \textbf{89.77} & 89.06 & 89.88 & 85.68 & -- & -- & -- \\
 & Time (s) & 115.77 & 159.59 & 247.02 & 418.1 & 770.75 & 1444.38 & 2334.94 & -- & -- & -- \\ \midrule
 \multirow{2}{*}{Linformer} & Accuracy (\%) & \textbf{79.52} & 86.41 & 88.73 & 88.25 & 86.57 & 86.53 & 86.94 & 85.70 & 83.75 & -- \\
 & Time (s) & 99.18 & 124.66 & 179.56 & 273.71 & 459.68 & 855.85 & 1239.88 & 2518.44 & 5445.57 & -- \\ \midrule
 \multirow{2}{*}{F-Net} & Accuracy (\%) & 76.42 & 80.25 & 80.87 & 84.34 & 83.55 & 86.36 & 86.00 & 86.29 & 86.45 & 86.40 \\
 & Time (s) & 84.84 & 95.58 & 113.2 & 165.77 & 267.21 & 492.44 & 861.48 & 2182.30 & 5191.26 & 9800.97 \\ \midrule
  \multirow{2}{*}{Hrrformer} & Accuracy (\%) & 78.06 & 83.95 & 88.07 & 89.22 & \textbf{90.59} & \textbf{90.89} & \textbf{91.03} & \textbf{90.65} & \textbf{90.13} & \textbf{89.46} \\
 & Time (s) & 91.35 & 117.96 & 165.18 & 247.32 & 423.55 & 748.48 & 1138.75 & 2315.62 & 5076.65 & 9237.78 \\ \midrule
\end{tabular}%
}
\end{table*}

In addition we provide the time to perform inference over the entire LRA text classification task for batch sizes varying between $2$ and $32$. This is shown in \autoref{tab:infer_batch}, where the time decreases as batch size increases due to reduced overhead and higher GPU compute efficiency. As can be seen the Hrrformer is uniformly faster, and more consistent in total run-time. Similarly, our method is faster for larger and small batch sizes, a particularly valuable benefit in inference where batching is not always possible. This can be seen in \autoref{tab:inference_results}, where the inference time for the Hrrformer with a batch size of $2$ is still $5\times$ faster than the inference time for the Transformer with a batch size of $32$.

\begin{figure}[!h]
\centerline{\includegraphics[width=\columnwidth]{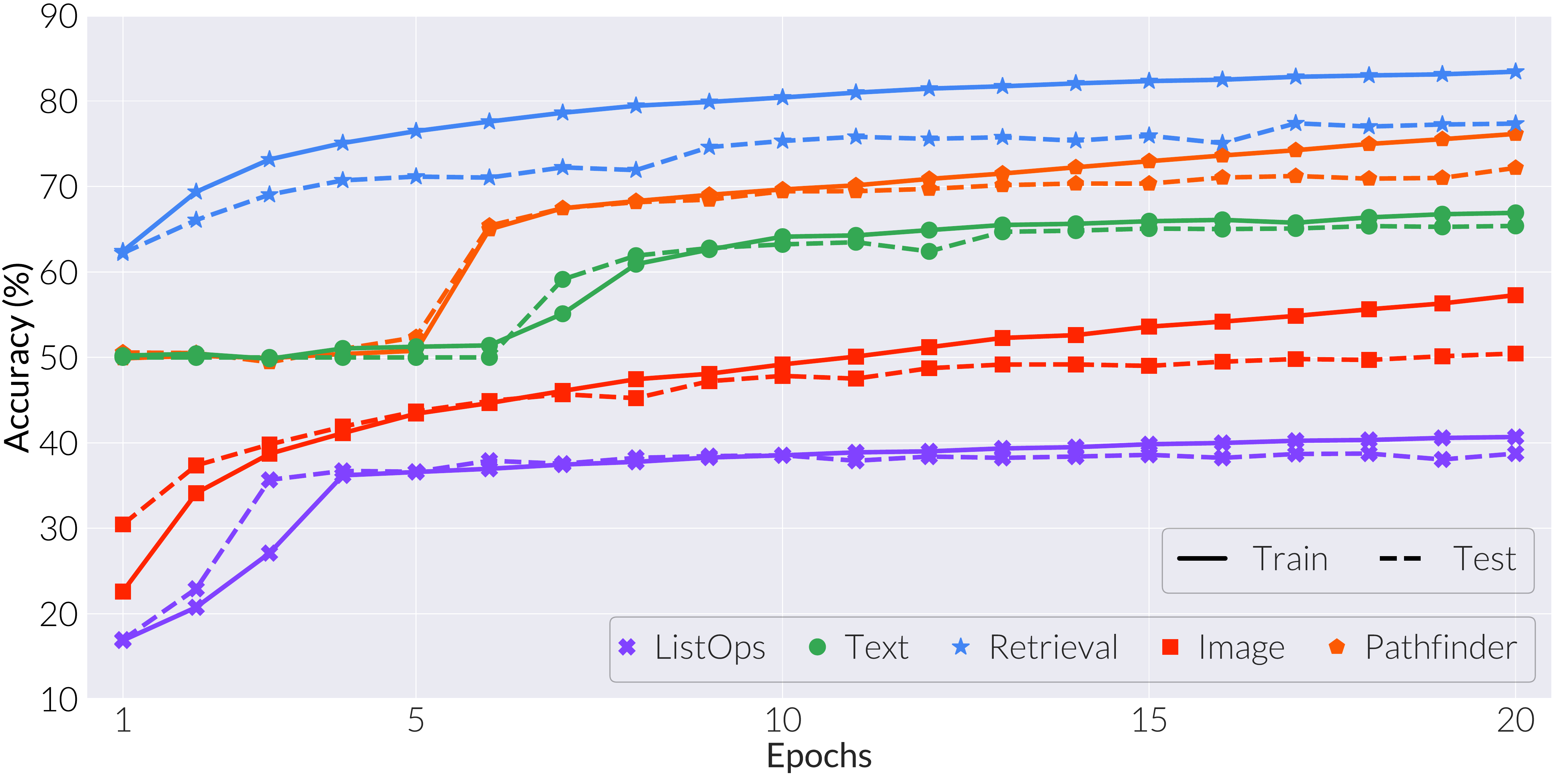}}
\caption{The learning curves of multi-layered Hrrformer in the LRA tasks. The training performance is solid lines and the test is dashed. Where prior works required 200 epochs of training, we can see that \textbf{20} epochs are sufficient for our Hrrformer. In most of the tasks, the 10-epoch performance of our Hrrformer is still highly competitive.} 
\label{fig:lra_acc}
\end{figure}

\begin{table}[!ht]
\centering
\caption{Inference timing comparison between single Hrrformer block and single Transformer block for different batch sizes (2-32). The experiment is performed on the LRA text classification task.}
\vspace{5pt}
\label{tab:infer_batch}
\adjustbox{max width=0.55\columnwidth}{%
\begin{tabular}{@{}ccccc@{}}
\toprule
\multirow{2}{*}{Batch Size} & \multicolumn{2}{c}{Hrrformer} & \multicolumn{2}{c}{Transformer} \\ \cmidrule(l){2-5} 
 & Time (s) & Memory (MB) & Time (s) & Memory (MB) \\ \midrule
 2 &  152.99 &  663.88 &  975.98 &  1584.53 \\
 3 &  127.34 &  936.51 &  815.30 &  4809.95 \\
 4 &  118.39 &  938.61 &  813.72 &  4809.95 \\
 5 &  117.15 &  1475.48 &  812.09 &  9104.92 \\
 6 &  115.37 &  1481.77 &  810.57 &  9107.01 \\
 7 &  115.44 &  1483.87 &  810.14 &  9109.11 \\
 8 &  113.01 &  1488.06 &  810.59 &  9109.11 \\
 9 &  114.81 &  2563.90 &  809.61 &  17701.14 \\
 10 &  113.34 &  2563.90 &  809.87 &  17701.14 \\
 11 &  113.83 &  2570.19 &  808.71 &  17705.34 \\
 12 &  113.11 &  2572.29 &  808.52 &  17705.34 \\
 13 &  114.65 &  2576.48 &  808.35 &  17707.43 \\
 14 &  114.64 &  2578.58 &  808.66 &  17709.53 \\
 15 &  114.42 &  2582.77 &  808.12 &  17711.63 \\
 16 &  113.81 &  2589.07 &  808.80 &  17711.63 \\
 17 &  86.80 &  2593.26 &  807.34 &  30976.11 \\
 18 &  85.95 &  4742.84 &  806.94 &  30976.11 \\
 19 &  85.56 &  4749.13 &  806.91 &  30978.21 \\
 20 &  85.11 &  4749.13 &  808.78 &  30980.31 \\
 21 &  84.78 &  4755.42 &  806.70 &  30980.31 \\
 22 &  83.95 &  4757.52 &  806.70 &  30982.41 \\
 23 &  83.23 &  4763.81 &  806.50 &  30986.60 \\
 24 &  81.84 &  4765.91 &  807.04 &  30988.70 \\
 25 &  83.06 &  4768.01 &  809.12 &  30988.70 \\
 26 &  83.01 &  4772.20 &  806.10 &  30990.79 \\
 27 &  82.87 &  4776.39 &  806.89 &  30992.89 \\
 28 &  82.70 &  4780.59 &  806.70 &  30994.99 \\
 29 &  82.60 &  4784.78 &  807.45 &  30994.99 \\
 30 &  82.30 &  4788.98 &  806.71 &  30999.18 \\
 31 &  82.44 &  4791.07 &  807.51 &  30999.18 \\
 32 &  80.83 & 4797.37 & 807.13 & 31001.28 \\ \bottomrule
\end{tabular}%
}
\end{table}

\begin{table}[!ht]
\centering
\caption{Inference time comparison with different self-attention models. The experiment is performed on the LRA text classification task with 6 layers of the encoder. In the chart, * indicates single layer.}
\vspace{5pt}
\label{tab:inference_results}
\adjustbox{max width=0.6\columnwidth}{
\begin{tabular}{@{}cccc@{}}
\toprule
Model & Time (s) $\downarrow$ & Speed (examples per second) $\uparrow$ & Memory (MB) $\downarrow$ \\ \midrule
Local Attention & 1910.33 & 13.09 & 9369.16 \\
Synthesizer & 1848.77 & 13.52 & 8983.28 \\
Sinkhorn Transformer & 1848.76 & 13.52 & 8983.28 \\
Transformer & 813.67 & 30.72 & 4805.75 \\
Sparse Transformer & 361.69 & 69.12 & 5229.38 \\
Longformer & 337.81 & 74.01 & 2815.56 \\
Performer & 170.75 & 146.41 & 728.89 \\
Linear Transformer & 163.15 & 153.23 & 913.44 \\
BigBird & 92.89 & 269.14 & 645.01 \\
Linformer & 88.96 & 281.03 & 645.01 \\
Hrrformer & \underline{33.38} & \underline{748.95} & \textbf{527.56} \\
Hrrformer* & \textbf{31.82} & \textbf{785.67} & \textbf{527.56} \\ \bottomrule
\end{tabular}%
}
\end{table}

\section{Weight Visualization}
\label{c:appendix}
The weight vector $\mathbf{w}$ 
is visualized for LRA image classification task. In this task, grayscale images of the CIFAR-10 dataset of dimension $32\times32$ are reshaped into a sequence of length $1024$. Therefore, the weight vector has the shape of $\mathbb{R}^{1024 \times 1}$. This vector is reshaped back to $32 \times 32$ for visualization which shows where in the image the weight vector of each head puts its attention. Figure~\ref{fig:heatmap_multi} demonstrates the attention map of the $4$ heads in each of the $3$ layers of Hrrformer for all the CIFAR-10 classes.

\begin{figure*}[!ht]
\centerline{\includegraphics[width=0.92\textwidth]{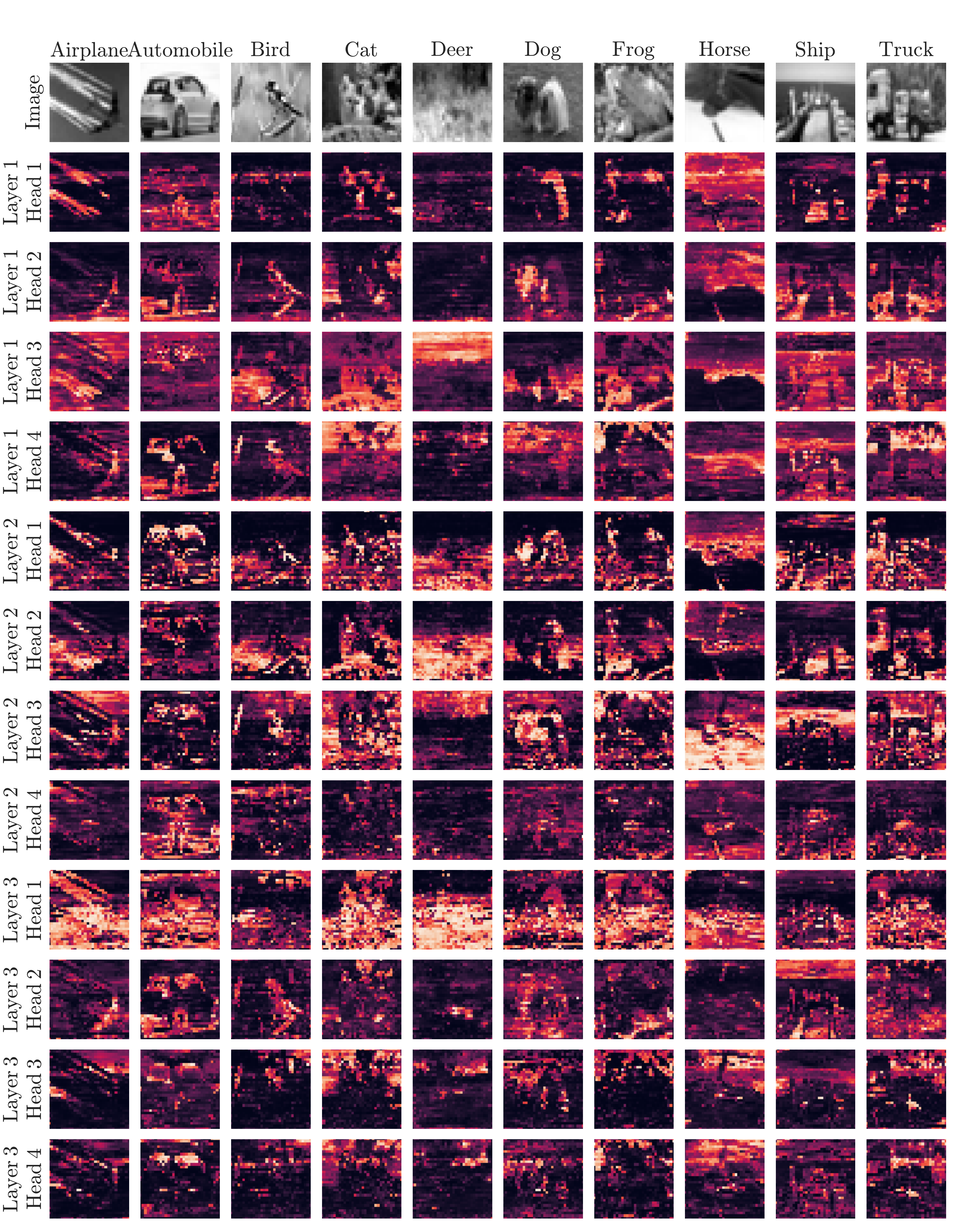}}
\caption{Visualization of weight $\mathbf{w} \in \mathbb{R}^{1024 \times 1}$ vector of multi-layer Hrrformer, reshaped to $32 \times 32$, the shape of the original image of the CIFAR-10 dataset used in the LRA image classification task.} 
\label{fig:heatmap_multi}
\vspace{5pt}
\end{figure*}

For the standard Transformer, the responses are a matrix of cross-correlations rather than a single vector. This makes the response more difficult to interpret. To visualize in the same manner we average the response of correlations with respect to a single item $t$ to get the same $1024$ shape, and visualize the results in \autoref{fig:transformer_weight}. As can be seen, the identification of structure is not as obviously. 

\begin{figure*}[!ht]
\centerline{\includegraphics[width=\textwidth]{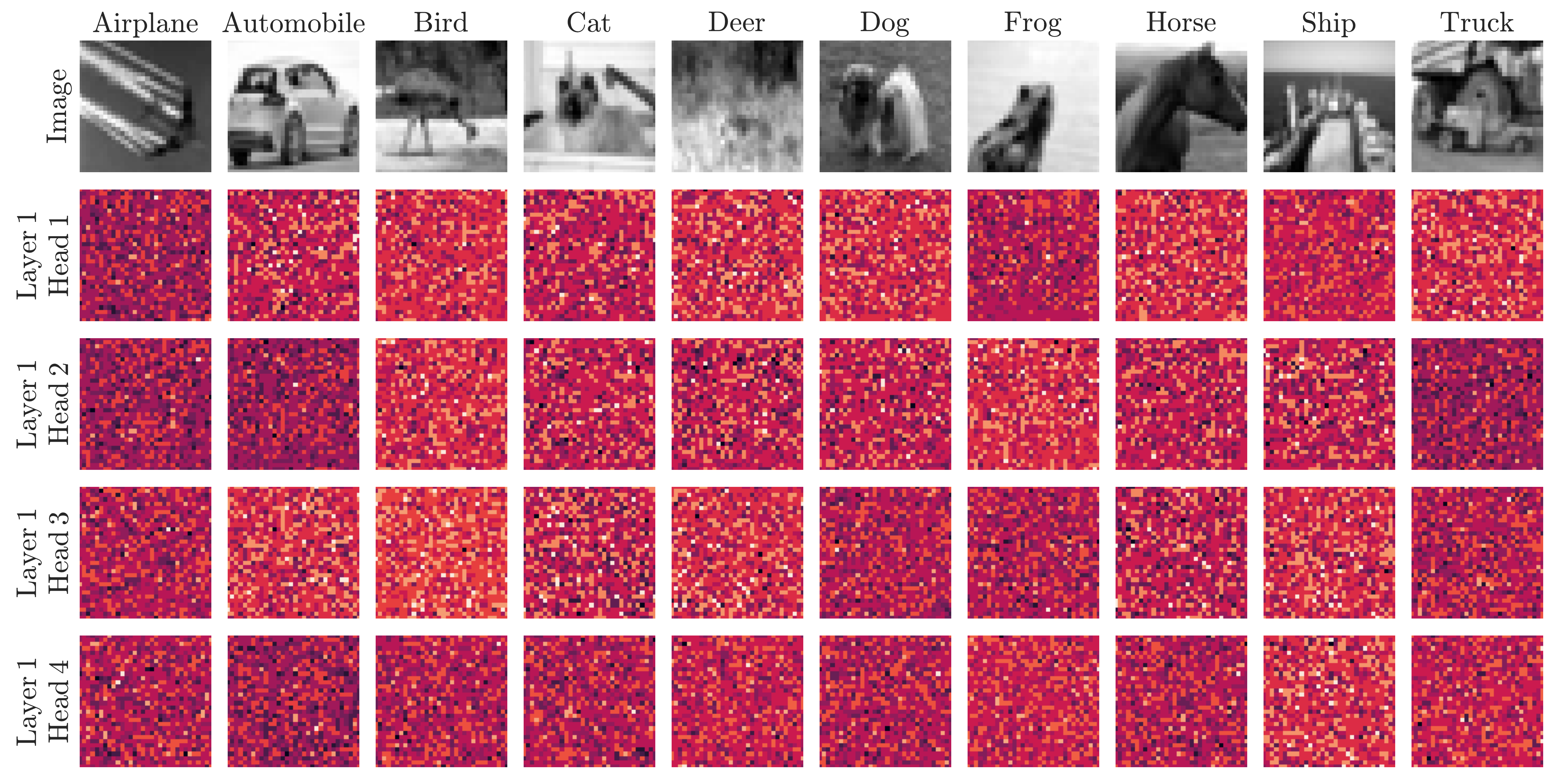}}
\caption{Visualization of transformer’s weight matrix of shape ${1024 \times 1024}$. The average attention across the key dimension is taken to reduce its dimension to $1024 \times 1$ and reshaped to $32 \times 32$, the shape of the original image of the CIFAR-10 dataset used in the LRA Image classification task.} 
\label{fig:transformer_weight}
\end{figure*}

\section{How Softmax ``Denoises'' Dot Product} \label{sec:softmax_denoise}
To understand how we can use the softmax operation as a kind of denoising step, consider the $H$ dimensional vectors $\boldsymbol{a}$, $\boldsymbol{b}$, $\boldsymbol{c}$, $\boldsymbol{d}$, and $\boldsymbol{z}$. If each element of all these vectors is sampled from $\mathcal{N}(0, 1/H)$, then we would expect that $(\boldsymbol{a} \otimes \boldsymbol{b} + \boldsymbol{c} \otimes \boldsymbol{d}  )^\top \boldsymbol{a}^\dagger \approx 1$. Similarly, the value $\boldsymbol{z}$ is not present, so we expect that $(\boldsymbol{a} \otimes \boldsymbol{b} + \boldsymbol{c} \otimes \boldsymbol{d}  )^\top \boldsymbol{z}^\dagger  \approx 0$. 
Now let us consider our use case, where the I.I.D. property is not true, and the query that is a noisy version of a present item. For simplicity of notation, we will use the explicit case of $H=2$ dimensions. We can query for $\boldsymbol{a}+\boldsymbol{z}$ get:
$$
\frac{\left(a_{0} + z_{0}\right) \left(a_{0} b_{0} + a_{1} b_{1} + c_{0} d_{0} + c_{1} d_{1}\right) - \left(a_{1} + z_{1}\right) \left(a_{0} b_{1} + a_{1} b_{0} + c_{0} d_{1} + c_{1} d_{0}\right)}{\left(a_{0} - a_{1} + z_{0} - z_{1}\right) \left(a_{0} + a_{1} + z_{0} + z_{1}\right)}
$$
Similarly if we query with $\boldsymbol{c}+\boldsymbol{z}$ we instead get:
$$
\frac{\left(c_{0} + z_{0}\right) \left(a_{0} b_{0} + a_{1} b_{1} + c_{0} d_{0} + c_{1} d_{1}\right) - \left(c_{1} + z_{1}\right) \left(a_{0} b_{1} + a_{1} b_{0} + c_{0} d_{1} + c_{1} d_{0}\right)}{\left(c_{0} - c_{1} + z_{0} - z_{1}\right) \left(c_{0} + c_{1} + z_{0} + z_{1}\right)}
$$

Notice that in both cases we have shared terms that are multiplied and added together. Under the sufficient conditions of I.I.D Gaussian, the linearity of expectation results in these terms canceling out into a single random variable with a zero mean. 

However, these also have the artifact in our application that for a non-present query, the response magnitude will have a similar value due to the repeated shared terms. 

We can simplify our understanding of this by imagining that there is an additional noise constant $\epsilon$ that we must add to each noise term. Then when we apply the softmax operation, we obtain the benefit that the softmax function is invariant to constant shifts in the input, i.e., $\forall \epsilon \in \mathbb{R}, \operatorname{softmax}(\boldsymbol{x}+\epsilon) = \operatorname{softmax}(\boldsymbol{x})$. Thus, we get the practical effect of softmax removing noise that we incur for not using I.I.D. Gaussian as the elements of our vectors.

\end{document}